\newcommand{\eg}{e.g.\@\xspace}
\newtheorem{theorem}{Theorem}[section]
\title{UDA: \underline{U}nsupervised \underline{D}ebiasing \underline{A}lignment for Pair-wise LLM-as-a-Judge}
\author{
    Yang Zhang\textsuperscript{\rm 1,\rm 2}\thanks{Work was done when interned at Z.ai.}\equalcontrib, Cunxiang Wang\textsuperscript{\rm 1,\rm 3}\equalcontrib\thanks{Corresponding author.}, Lindong Wu\textsuperscript{\rm 1}, Wenbo Yu\textsuperscript{\rm 1}, \\Yidong Wang\textsuperscript{\rm 1,\rm 2}, Guangsheng Bao\textsuperscript{\rm 4}, Jie Tang\textsuperscript{\rm 3}\footnotemark[3]
}
\title{My Publication Title --- Single Author}
\author {
    Author Name
}
\title{My Publication Title --- Multiple Authors}
\author {
    % Authors
    First Author Name\textsuperscript{\rm 1},
    Second Author Name\textsuperscript{\rm 2},
    Third Author Name\textsuperscript{\rm 1}
}
\begin{document}

\maketitle

\begin{abstract}
Pairwise evaluation of Large Language Models (LLMs) is a common paradigm, but it is prone to preference bias, where judges systematically favor certain outputs, such as their own. This bias leads to inconsistent and skewed rankings across different judges. To address this, we first empirically demonstrate significant and heterogeneous biases in cross-model evaluations. We then propose UDA (Unsupervised Debiasing Alignment), a framework that reduces inter-judge disagreement by dynamically adjusting the Elo rating system. For each pairwise comparison, a compact neural network learns to adaptively set the K-factor and refine win probabilities. Crucially, UDA operates in a fully unsupervised manner, guided solely by the objective of minimizing the dispersion among the Elo trajectories of all judges. This forces an alignment towards a collective consensus, which serves as an unsupervised proxy for a more stable and reproducible evaluation. In addition, we provide theoretical motivation demonstrating how alignment towards a consensus can reduce aggregate system bias. Experiments show that UDA significantly reduces the inter-judge rating standard deviation by up to 63.4\% and improves the average correlation with human judgments by 24.7\%. Notably, UDA elevates the performance of poorly performing judges to achieve parity with high-quality ones, fostering a more robust and reliable evaluation ecosystem. Code and data are available at \url{https://anonymous.4open.science/r/62AB93CD-23B4}.
\end{abstract}

\section{Introduction}
Large Language Models (LLMs) like GPT-4 and Claude-3 now drive progress across Natural Language Processing (NLP), while their daily use in commerce, education, and entertainment seamlessly integrates them into everyday life \citep{ref1zhao2025surveylargelanguagemodels, ref2minaee2025largelanguagemodelssurvey}. Rigorous evaluation is therefore indispensable: it guides model optimization and equips users with an empirical basis for selection, ensuring robust deployment in the wild \citep{ref3chang2023surveyevaluationlargelanguage}.

\begin{figure}[t]
\centering
\includegraphics[width=0.9\columnwidth]{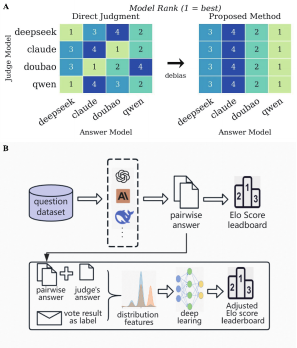}
\caption{\textbf{A.} Some models prefer their own answers, and mitigating this bias from using different LLMs can make the results more accurate. \textbf{B.} Dynamically recalibrate, rather than naively adopts, LLM-judged scores, using consensus among judges as the supervisory label in lieu of human annotation.}
\label{fig0}
\end{figure}
The paradigm of  `LLM-as-a-judge' \citep{ref4zheng2023judgingllmasajudgemtbenchchatbot} delegates the evaluation to a second large model that acts as an autonomous judge: the judge is prompted to score, rank or critique the outputs of the target model without human labels. Yet this very mechanism introduces systematic bias \citep{ref5wang2023largelanguagemodelsfair, ref6thakur2025judgingjudgesevaluatingalignment}. In this paradigm, pairwise judgments consistently outperform pointwise scoring in robustness and are widely adopted, for example, \citeauthor{ref4zheng2023judgingllmasajudgemtbenchchatbot} first established this in the LMSys Chatbot Arena \citep{ref20chiang2024chatbotarenaopenplatform}: GPT-4 achieved markedly higher agreement with human preferences when ranking anonymized responses head-to-head than when assigning absolute scores. Subsequent works  \citep{ref13wang2025improvingllmasajudgeinferencejudgment} corroborate that pairwise ranking reduces intra-model variance; on RewardBench, GPT-4o attains 90.5\% accuracy in pairwise mode versus 88.0\% in pointwise mode, while smaller models trend in the opposite direction. 
Different LLM are built and trained in unique ways, causing them to have specific biases, such as a preference for wordy or stylistically similar answers. This results in varied scores and rankings. Prior works show that score disparities exceeding 30\% among LLMs on identical benchmarks \citep{ref7ye2024justiceprejudicequantifyingbiases}.

Most researches \citep{ref3chang2023surveyevaluationlargelanguage} mainly focus on the improvement of evaluation indicators and methods, but the studies on how to reduce the impact of model preferences during the evaluation process are not in-depth enough. Although recent work calibrates proprietary models on-the-fly via prompt engineering \citep{ref10zhou2024mitigatingbiaslargelanguage}, the practice remains ad-hoc and theoretically thin. Ensemble schemes \citep{ref11wataoka2025selfpreferencebiasllmasajudge} mitigate bias but explode in cost, and Nash-based re-ranking \citep{ref12liu2025reevaluatingopenendedevaluationlarge} scales poorly. In addition, there is relatively little research on how to use deep learning techniques to dynamically adjust the evaluation process, rather than just directly using the evaluation results provided by large models, in order to reduce the bias from different judge models.

To address these limitations, we propose UDA (Unsupervised Debiasing Alignment), a fully automated, annotation-free framework that reconciles heterogeneous LLM judges. UDA retains the anonymity and simplicity of pairwise Elo tournaments but injects a learned, instance-level correction mechanism. Concretely, for each comparison, we (i) calculate the distributional similarity between the two answers, (ii) quantify the judge’s preference by contrasting its own generated response with the two candidates, and (iii) feed these continuous signals into a lightweight neural adapter that outputs an adaptive K-factor and refined win probability expectations. The adapter is trained to pull each judge's scoring trajectory towards the collective consensus of all judges. This consensus, while not a perfect ground truth, serves as a pragmatic, unsupervised signal to mitigate extreme idiosyncratic biases. Because these adjustments are conditioned solely on the response distributions, UDA is model-agnostic and requires no additional fine-tuning.

Bias usually shifts across topics: a judge fair on one task may favor itself on another. To verify generalisation beyond public benchmarks, we evaluate UDA zero-shot on a newly curated \textsc{Human-Annotated Transfer Set} that contains 100 open-ended prompts distinct from the training corpus. 
Without any retraining, UDA still shrinks the inter-judge standard deviation by 63.4\% and boosts correlation with human preferences by 24.7\%, demonstrating robust transfer to unseen domains and judges.

We first quantify the tendency of each judge to over- or under-rate its own response.
On our curated dataset, baseline Elo produces self-scores that deviate from the average score assigned by the remaining judges by as much as $-$21\% (underrating) to +56\% (overrating).
After applying UDA, the same deviations narrow to only $-$15\% to +4\%, indicating that extreme self-preference is largely suppressed.
Under mild assumptions, UDA minimises the \emph{aggregate} bias across all judges (Theorem~1).
However, because the procedure pulls every judge toward the consensus, exceptionally well-calibrated judges may experience a slight drop in individual accuracy; the collective variance reduction dominates this per-model cost.
Despite the small per-model trade-off, UDA improves the average Pearson correlation with human rankings from 0.65 to 0.81 (+25\%), confirming that shrinking inter-judge disagreement also aligns the final scores more closely with human preferences.

To the best of our knowledge, we are the first to reduce pairwise LLM-as-a-judge preference bias in an unsupervised manner.%by \emph{learning the distributional discrepancies between a judge model's own response and the candidate responses it evaluates}.

\section{Related Work}
\subsection{LLM-as-a-Judge}
The LLM-as-a-judge approach has been widely adopted for evaluating LLMs due to its low cost and high efficiency \citep{ref14gu2025surveyllmasajudge}. This method leverages one LLM to assess the outputs of another, providing fine-grained evaluations. For instance, OpenAI's GPT and Anthropic's Claude are commonly utilized as judge models to evaluate the performance of other LLMs. However, this approach may introduce significant bias. Different judge models can produce varying results and rankings, as their specific traits, differences in training data, and subjective evaluation criteria can influence the evaluation results.

\subsection{Elo Rating System}
The Elo rating system \citep{ref18ELOpaper}, traditionally used in chess, has been adapted for evaluating large language models (LLMs). Chatbot Arena \citep{ref20chiang2024chatbotarenaopenplatform} introduced an Elo system that assesses chatbots through pairwise comparisons based on user votes. This method offers a dynamic and interactive approach to evaluation, providing a degree of objectivity and comparability. However, it may be influenced by user preferences and the diversity of evaluation questions. Yet, this system still relies on user feedback and faces challenges in scalability and scenario diversity.  \citet{gray2022usingeloratingmetric} combined the ELO system with traditional metric-based evaluations, integrating model rankings with specific metric analyses for a more comprehensive assessment. Despite these advancements, existing ELO evaluation systems in LLMs still struggle with balancing the weight of different evaluation dimensions and criteria.

\subsection{Bias in LLM-as-a-Judge}
Preference bias in LLMs is a critical evaluation issue \citep{ref11wataoka2025selfpreferencebiasllmasajudge}. It causes LLMs to overrate certain outputs, especially their own output. Studies \citep{ref4zheng2023judgingllmasajudgemtbenchchatbot, ref15xu2024prideprejudicellmamplifies} reveal that some LLMs favor their own generations. This bias may step from the model's training data and architecture. Larger and more capable models often show stronger self-preference. The model's internal probabilities may favor familiar generation styles, and alignment training artifacts could also contribute. \citeauthor{ref16panickssery2024llmevaluatorsrecognizefavor} explored the link between self-preference bias and self-recognition ability. As shown in Figure \ref{fig2}, it reveals heterogeneous LLM preferences.

\begin{figure}[t]
\centering
\includegraphics[width=1.0\columnwidth]{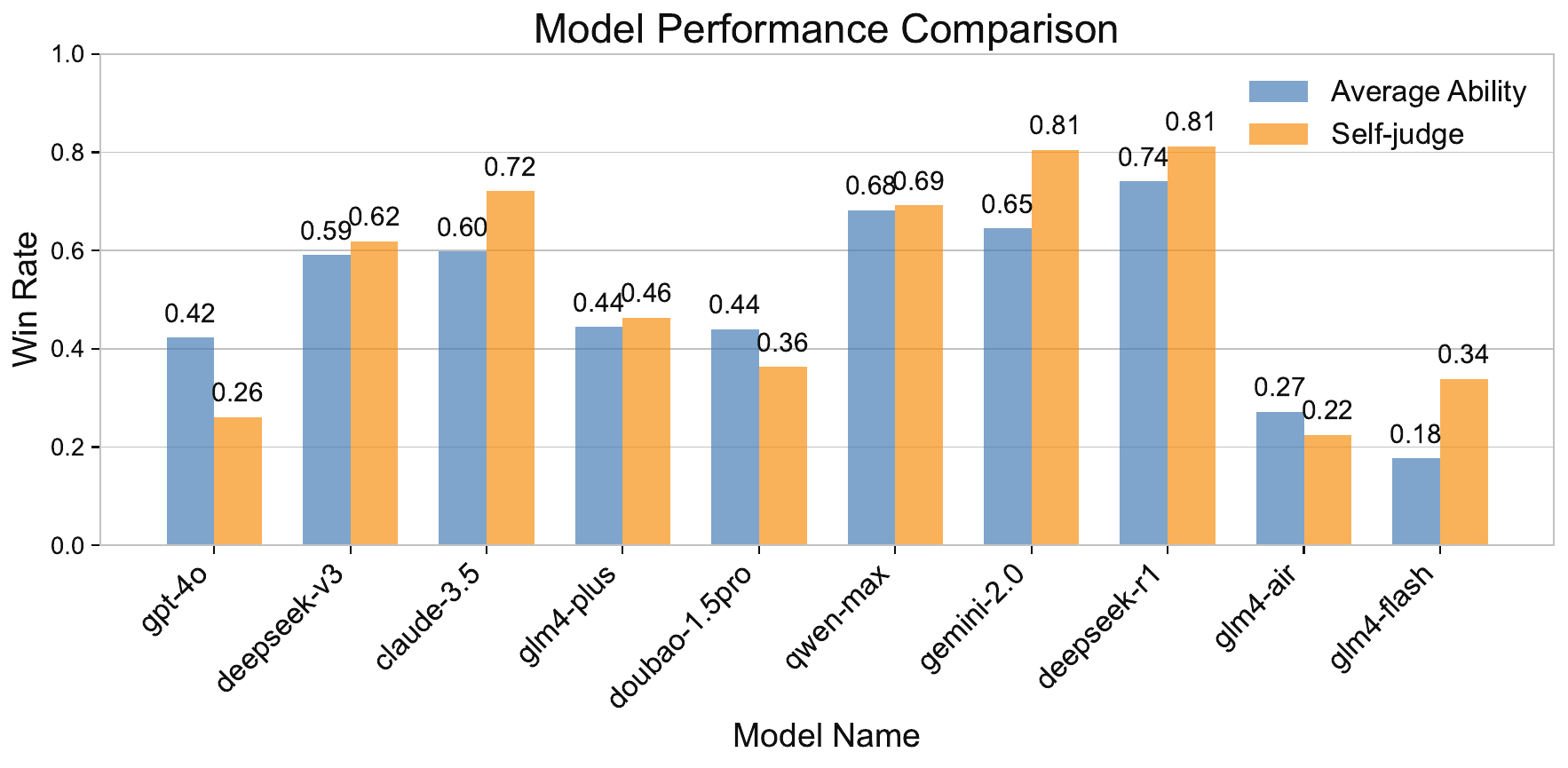} 
\caption{Pairwise answer evaluation experiments were conducted using different large models as judges on the same Arena Hard dataset. The figure shows that LLMs exhibit self-preferential bias: some over-rate their own answers relative to other judges, others under-rate them. On ArenaHard (the depicted dataset) this bias ranges from $-$38\% to +90\%, while on our dataset it spans $-$21\% to +56\%.}
\label{fig2}
\end{figure}

Other biases also exist. Some models show position bias, preferring the first or last option in pairwise comparisons. Model performance is influenced by whether real names or anonymous aliases are used \citep{ref17koo2024benchmarkingcognitivebiaseslarge}. Large models tend to favor their own responses under anonymity, but this self-preference decreases when real names are shown, while smaller models' self-preference increases.

\subsection{Mitigating LLM Judgers' Bias}
\citet{ref10zhou2024mitigatingbiaslargelanguage}  tackled the bias issue in the ``LLM-as-a-judge" approach by quantifying the bias in large models' predictive distributions and then subtracting it to reduce the bias. However, this method has limitations when evaluating close-source llm. Due to the inaccessibility of predictive logic, it is difficult to calculate the distributional differences between pre-trained and SFT models. Consequently, it resorts to prompt engineering to model surface quality and mitigate bias. \citet{ref12liu2025reevaluatingopenendedevaluationlarge} adopted a game-theoretic perspective, framing evaluation as a three-player game and proposing a Nash equilibrium solution. Yet, the equilibrium ratings lack scalability and cannot be directly applied to many real-world large models.
\citet{ref19liu2025amelostableframeworkarenabased} tried to optimize ELo rating systems in arena settings, they  proposed the am-ELO framework, which uses maximum likelihood estimation (MLE) to adjust the Elo probability function by incorporating annotator abilities, enhancing stability and accuracy from an annotation accuracy perspective.

However, these methods either rely on inaccessible model internals, resort to prompt engineering, or heuristics-based reranking that scales poorly. Moreover, these methods overlook the unequal impact of individual comparisons in scoring systems: a weaker model’s loss to a near-equal rival is weighted identically to its loss to a markedly stronger one, an evident flaw that undermines power when sample sizes are small. They also ignore judge-induced distributional biases—e.g., inflated scores arising simply from a preference for responses resembling their own.
In contrast, our approach dynamically reweights each pairwise outcome and incorporates distributional features to correct for biases, using the principle of consensus alignment as a supervisory signal in the absence of human labels.

\section{Method}
\label{sec:UDA}
% \subsection{Overview}
UDA (Unsupervised Debiasing Alignment) is an annotation-free method that reconciles a set of $M$ heterogeneous LLM judges, $\mathcal{J}$, by dynamically adjusting the Elo update rule for a set of $N$ candidate models, $\mathcal{M}$. Given a prompt $x$, a judge $k \in \mathcal{J}$ expresses its raw binary preference $r_{ij}^{(k)}\in\{0,1\}$ for paired responses $(a_i, a_j)$ from models $i, j \in \mathcal{M}$, which in turn determines the Elo score $R_i^{(k)}\in\mathbb{R}$ for each model under that judge. The key idea is to treat the \textbf{collective consensus ranking} of all judges as a pragmatic, unsupervised target without human labels.

\subsection{Baseline Elo Updating Rule}
Baseline Elo updates judge $k$'s score via
\begin{equation}
  R_i^{(k)}\leftarrow R_i^{(k)} + K\bigl(r_{ij}^{(k)} - \hat p_{ij}^{(k)}\bigr),\;
  \hat p_{ij}^{(k)}=\sigma\!\bigl(R_i^{(k)}-R_j^{(k)}\bigr),
  \label{eq:base-elo}
\end{equation}
where $\sigma(z)=1/(1+e^{-z})$ and $K$ is a global constant. 

\subsection{Adaptive Debiasing Network}
For every pair of answers $(a_i, a_j)$ and a judge $k$, we extract a feature vector $\phi_{ij}^{(k)}$ based on the semantic embeddings of the answers ($\mathbf{e}_i, \mathbf{e}_j$) and the judge's own generated response ($\mathbf{e}_k$). These features, constructed without human labels, capture two key aspects: (i) the semantic and stylistic relationship between the two answers being compared, and (ii) the similarity of each answer to the judge's own output, which serves as a signal for potential self-preference bias. The feature set includes vector operations (e.g., element-wise differences) and scalar metrics (e.g., cosine similarity, KL-divergence). The precise construction of the full feature vector is detailed in Algorithm~\ref{alg:features}.

A small Multilayer Perceptron (MLP) $f_\theta:\mathbb{R}^{6D+9}\rightarrow\mathbb{R}^3$ then maps this feature vector to the adaptive parameters:
\begin{equation}
  \bigl(K_{ij},\;s_i,\;s_j\bigr)=f_\theta\left(\phi_{ij}^{(k)}\right),
\end{equation}
where $K_{ij}$ is the \emph{instance-level} K-factor, and $s_i,s_j\in(0,1)$ are soft labels that replace the hard $r_{ij}^{(k)}$ to mitigate preference noise ($s_i+s_j=1$ after softmax normalization). All parameters $\theta$ are shared across all judges and prompts.

\subsection{The Consensus Anchor Principle}
\label{sec:consensus_principle}
A core challenge in unsupervised debiasing is the absence of a ``golden" reference. Our methodology is built upon a foundational principle: while individual LLM judges are fraught with idiosyncratic biases, the consensus derived from a diverse pool of judges can serve as a powerful proxy for a more stable and reproducible evaluation.

This principle is motivated by our empirical findings (Figure~\ref{fig2}). The biases are not monolithic; they are \textbf{heterogeneous}. For instance, on ArenaHard, a judge like glm-4-flash exhibits strong self-preference (+90\%), while a top-tier model like gpt-4o can paradoxically underrate its own responses ($-$38\%). We hypothesize that these opposing biases tend to partially cancel each other out when aggregated.

\begin{algorithm}[H]
\caption{Feature Vector Construction ($\phi_{ij}^{(k)}$) for UDA}
\label{alg:features}
\begin{algorithmic}[1]
\REQUIRE Answer embeddings $\mathbf{e}_i, \mathbf{e}_j \in \mathbb{R}^{D}$; Judge's own embedding $\mathbf{e}_k \in \mathbb{R}^{D}$
\STATE Let $\odot$ denote the element-wise product.

\STATE \COMMENT{\textbf{Part 1: High-Dimensional Features (6 vectors)}}
\STATE $\mathbf{v}_{\text{diff}, ij} \leftarrow |\mathbf{e}_i - \mathbf{e}_j|$
\STATE $\mathbf{v}_{\text{diff}, ik} \leftarrow |\mathbf{e}_i - \mathbf{e}_k|$ \COMMENT{Judge self-bias feature}
\STATE $\mathbf{v}_{\text{diff}, jk} \leftarrow |\mathbf{e}_j - \mathbf{e}_k|$ \COMMENT{Judge self-bias feature}
\STATE $n_i \leftarrow \|\mathbf{e}_i\|_2, \; n_j \leftarrow \|\mathbf{e}_j\|_2, \; n_k \leftarrow \|\mathbf{e}_k\|_2$
\STATE $\mathbf{v}_{\text{norm-prod}, ij} \leftarrow (\mathbf{e}_i \odot \mathbf{e}_j) / (n_i n_j + \epsilon)$
\STATE $\mathbf{v}_{\text{norm-prod}, ik} \leftarrow (\mathbf{e}_i \odot \mathbf{e}_k) / (n_i n_k + \epsilon)$ \COMMENT{Judge self-bias feature}
\STATE $\mathbf{v}_{\text{norm-prod}, jk} \leftarrow (\mathbf{e}_j \odot \mathbf{e}_k) / (n_j n_k + \epsilon)$ \COMMENT{Judge self-bias feature}

\STATE \COMMENT{\textbf{Part 2: Scalar Features (9 scalars)}}
\STATE $s_{ij} \leftarrow \cos(\mathbf{e}_i, \mathbf{e}_j)$, \; $s_{ik} \leftarrow \cos(\mathbf{e}_i, \mathbf{e}_k)$, \; $s_{jk} \leftarrow \cos(\mathbf{e}_j, \mathbf{e}_k)$
\STATE $\mathbf{p}_i \leftarrow \mathrm{softmax}(\mathbf{e}_i), \; \mathbf{p}_j \leftarrow \mathrm{softmax}(\mathbf{e}_j), \; \mathbf{p}_k \leftarrow \mathrm{softmax}(\mathbf{e}_k)$
\STATE $d_{\text{KL}}^{(j,i)} \leftarrow \mathrm{KL}(\mathbf{p}_j \parallel \mathbf{p}_i)$, \; $d_{\text{KL}}^{(i,k)} \leftarrow \mathrm{KL}(\mathbf{p}_i \parallel \mathbf{p}_k)$, \; $d_{\text{KL}}^{(j,k)} \leftarrow \mathrm{KL}(\mathbf{p}_j \parallel \mathbf{p}_k)$
\STATE $\ell_{\text{diff}, ij} \leftarrow |n_i - n_j|$, \; $\ell_{\text{diff}, ik} \leftarrow |n_i - n_k|$, \; $\ell_{\text{diff}, jk} \leftarrow |n_j - n_k|$

\STATE \COMMENT{\textbf{Part 3: Concatenation}}
\STATE $\phi_{ij}^{(k)} \leftarrow \mathrm{concat}\bigl[
   \mathbf{v}_{\text{diff}, ij}, \mathbf{v}_{\text{diff}, ik}, \mathbf{v}_{\text{diff}, jk}, $
\STATE $\quad \mathbf{v}_{\text{norm-prod}, ij}, \mathbf{v}_{\text{norm-prod}, ik}, \mathbf{v}_{\text{norm-prod}, jk}, $
\STATE $\quad s_{ij}, s_{ik}, s_{jk}, \; d_{\text{KL}}^{(j,i)}, d_{\text{KL}}^{(i,k)}, d_{\text{KL}}^{(j,k)}, $
\STATE $\quad \ell_{\text{diff}, ij}, \ell_{\text{diff}, ik}, \ell_{\text{diff}, jk}
\bigr]$
\RETURN $\phi_{ij}^{(k)} \in \mathbb{R}^{6D+9}$
\end{algorithmic}
\end{algorithm}

However, we acknowledge a critical limitation of this assumption: if a majority of judges share a systemic bias (e.g., a preference for verbosity), their consensus will reflect a ``bias average" rather than a debiased center. In such a scenario, UDA might inadvertently reinforce this majority bias.

Therefore, UDA does not treat the consensus score, denoted as $\mathbf{G}$, as an infallible ground truth. Instead, we define it as a \textbf{stabilizing proxy target}. The objective of UDA is not to perfectly replicate this consensus, but to use it as a supervisory signal to \textbf{reduce system variance}. By training each judge to align with the collective agreement, UDA effectively mitigates the extreme, idiosyncratic preferences of any single judge. The underlying hypothesis is that this variance reduction will enhance the overall reproducibility and alignment with true quality, a hypothesis we validate empirically against human judgments in our experiments.

\begin{table*}[ht]
\centering
\small
\setlength{\tabcolsep}{1mm}

% ---------- 表 1 ----------
\begin{tabular}{lcccccccccc}
\toprule
Model &
\makecell{gpt-4o} &
\makecell{claude\\-3.5} &
\makecell{glm-4\\-air} &
\makecell{glm-4\\-plus} &
\makecell{glm-4\\-flash} &
\makecell{doubao-1.5pro} &
\makecell{qwen-max} &
\makecell{deepseek-v3} &
\makecell{gemini-\\2.0-flash} &
\makecell{deepseek-r1} \\
\midrule
Baseline Elo  & 162.0 & 154.5 & 122.6 & 99.7 & 118.2 & 170.1 & 109.0 & 149.6 & \textbf{341.9} & 157.5 \\
UDA Method    & \textbf{71.5} & \textbf{44.8} & \textbf{47.8} & \textbf{42.1} & \textbf{50.3} & \textbf{72.6} & \textbf{58.1} & \textbf{61.1} & \textbf{128.8} & \textbf{71.1} \\
\midrule
Reduction\% & 55.8 & 71.0 & 61.1 & 57.8 & 57.4 & 57.3 & 46.7 & 59.2 & 62.3 & 54.9 \\
\bottomrule
\end{tabular}

\caption{Inter-LLM judge score standard deviation ($\downarrow$) on ArenaHard Dataset.}
\label{tab:variance}

\vspace{1.2em}  % 或 \medskip，调节两表之间的垂直间距

% ---------- 表 2 ----------
\begin{tabular}{lcccccccccc}
\toprule
Model &
\makecell{gpt-4o} &
\makecell{claude\\-3.5} &
\makecell{glm-4\\-air} &
\makecell{glm-4\\-plus} &
\makecell{glm-4\\-flash} &
\makecell{doubao-1.5pro} &
\makecell{qwen-max} &
\makecell{deepseek-v3} &
\makecell{gemini-\\2.0-flash} &
\makecell{deepseek-r1} \\
\midrule
Baseline Elo & 310.9 & 119.3 & 91.3 & 98.6 & 161.5 & 250.1 & 182.0 & 165.6 & 131.1 & 429.1 \\
UDA Method   & \textbf{125.0} & \textbf{28.3} & \textbf{56.3} & \textbf{37.2} & \textbf{80.4} & \textbf{64.8} & \textbf{61.8} & \textbf{77.4} & \textbf{52.7} & \textbf{125.8} \\
\midrule
Reduction\% & 59.8 & 76.3 & 38.3 & 62.2 & 50.2 & 74.1 & 66.0 & 53.3 & 59.8 & 70.7 \\
\bottomrule
\end{tabular}

\caption{Inter-LLM judge score standard deviation ($\downarrow$) on \textsc{Human-Annotated Transfer Set}.}
\label{tab:variance_transfer}
\end{table*}

\subsection{Consensus-Driven Training}
Let the mini-batch $\mathcal{B}_b$ contain \emph{all} pairwise comparisons issued by a single judge $k_b$ in one forward pass. After applying the adaptive update rule, we obtain a vector of per-model Elo scores $\mathbf{R}^{(b)}\!\in\!\mathbb{R}^N$ for the batch. Across $B$ such judge-homogeneous batches, we first compute the consensus anchor using $\bar{\mathbf{R}} = \frac{1}{B}\sum_{b=1}^{B} \mathbf{R}^{(b)}$ and $\mathbf{G} = \frac{1}{B}\sum_{b=1}^{B} \mathbf{R}^{\text{base}}_{(b)}$.

where $\mathbf{R}^{\text{base}}_{(b)}$ are the scores produced by the baseline Elo rule with fixed $K\!=\!32$. We use $\mathbf{G}$ as our optimization target. It is important to clarify that we are aware of the potential biases within the baseline Elo system. However, in a fully unsupervised setting, the consensus score $\mathbf{G}$ represents the most practical and accessible proxy for a stable, judge-agnostic ranking. Our framework is designed to learn a corrective function that maps raw comparisons to debiased outcomes, using the noisy but stable signal from $\mathbf{G}$ as its guide.

The training objective is:
\begin{align}
\mathcal{L}(\theta) ={}& \alpha\sum_b\|\mathbf{R}^{(b)}-\mathbf{G}\|_2^2 \nonumber + \sigma\|\bar{\mathbf{R}}-\mathbf{G}\|_2^2\\
& + \beta\sum_b(1-\rho_p(\mathbf{R}^{(b)},\mathbf{G})) 
\label{eq:loss}
\end{align}
with $\rho_p(\cdot,\cdot)$ the Pearson correlation. Minimising Eq.\,\eqref{eq:loss} simultaneously (i) pulls each judge-specific trajectory toward the baseline consensus, (ii) maximises the linear rank correlation with that consensus, and (iii) enforces collective agreement across all judges. The ultimate success of this approach is evaluated by its ability to improve correlation with external, human-annotated ground truth.

\subsection{Adaptive Elo Updating Rule}
After training, judge $k$ updates its scores with

\begin{algorithm}[H]
\caption{UDA Adaptive Elo Update}
\label{alg:update}
\begin{algorithmic}
\REQUIRE responses $a_i,a_j$, judge $k$, current scores $R_i^{(k)},R_j^{(k)}$
\STATE compute features $\phi_{ij}$ (Algorithm~\ref{alg:features})
\STATE $(K_{ij},s_i,s_j) \gets f_\theta(\phi_{ij})$ \COMMENT{forward pass}
\STATE $\hat p_{ij}^{(k)} \gets \sigma\!\bigl(R_i^{(k)}-R_j^{(k)}\bigr)$
\STATE update
\STATE \quad $R_i^{(k)} \leftarrow R_i^{(k)} + K_{ij}\bigl(s_i - \hat p_{ij}^{(k)}\bigr)$
\STATE \quad $R_j^{(k)} \leftarrow R_j^{(k)} + K_{ij}\bigl(s_j - (1-\hat p_{ij}^{(k)})\bigr)$
\end{algorithmic}
\end{algorithm}

\subsection{Theoretical Motivation for Consensus Alignment}
Our core claim is that aligning diverse judges towards a consensus reduces aggregate system bias. To provide theoretical motivation for this approach, we analyze a simplified, idealized model of the alignment process.

\begin{theorem}[Principle of Aggregate Bias Reduction]
Let $R_i^*$ be the unknown true Elo score for model $i$. Let $R_i^{(k)}$ be the score assigned by judge $k$, which includes a bias term $\epsilon_i^{(k)} = R_i^{(k)} - R_i^*$. The UDA procedure, by optimizing each judge's score towards the consensus, is motivated by the principle that this reduces the total expected absolute bias across all judges. Under an idealized linear shrinkage model of the consensus alignment, for any model $i$, the expected aggregate bias after alignment is less than or equal to the baseline aggregate bias:
$$\sum_{k=1}^{M}\mathbb{E}\bigl[|\epsilon_i^{(k)}(\text{UDA})|\bigr] \le \sum_{k=1}^{M}\mathbb{E}\bigl[|\epsilon_i^{(k)}(\text{base})|\bigr]$$
\end{theorem}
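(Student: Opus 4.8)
The plan is to make precise the ``idealized linear shrinkage model'' invoked in the statement and then reduce the claim to two applications of the triangle inequality. Fix a model $i$. Take the consensus anchor to be the across-judge average $G_i = \frac{1}{M}\sum_{k=1}^{M} R_i^{(k)}$, whose own bias is $\gamma_i = G_i - R_i^* = \frac{1}{M}\sum_{k=1}^{M}\epsilon_i^{(k)}$. Model the net effect of the UDA adapter (the adaptive $K_{ij}$ together with the soft labels $s_i,s_j$, which jointly govern how strongly each comparison pulls a judge's Elo trajectory toward $\mathbf{G}$) as a convex contraction of each judge's score toward the anchor: $R_i^{(k)}(\text{UDA}) = (1-\lambda)\,R_i^{(k)} + \lambda\,G_i$ for a common shrinkage coefficient $\lambda\in[0,1]$. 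Subtracting $R_i^*$ yields the bias recursion $\epsilon_i^{(k)}(\text{UDA}) = (1-\lambda)\,\epsilon_i^{(k)} + \lambda\,\gamma_i$.

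First I would bound one term: by the triangle inequality, $|\epsilon_i^{(k)}(\text{UDA})| \le (1-\lambda)\,|\epsilon_i^{(k)}| + \lambda\,|\gamma_i|$. Second, I would control the anchor bias by a further application of the triangle inequality (equivalently, convexity of $|\cdot|$): $|\gamma_i| = \bigl|\frac{1}{M}\sum_{k=1}^{M}\epsilon_i^{(k)}\bigr| \le \frac{1}{M}\sum_{k=1}^{M}|\epsilon_i^{(k)}|$. Summing the first bound over $k=1,\dots,M$ and substituting the second gives $\sum_{k}|\epsilon_i^{(k)}(\text{UDA})| \le (1-\lambda)\sum_{k}|\epsilon_i^{(k)}| + \lambda M|\gamma_i| \le (1-\lambda)\sum_{k}|\epsilon_i^{(k)}| + \lambda\sum_{k}|\epsilon_i^{(k)}| = \sum_{k}|\epsilon_i^{(k)}|$. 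Taking expectations, which preserves the pointwise inequality, produces exactly $\sum_{k}\mathbb{E}[|\epsilon_i^{(k)}(\text{UDA})|] \le \sum_{k}\mathbb{E}[|\epsilon_i^{(k)}(\text{base})|]$. I would also record the strictness condition: the bound is strict whenever $\lambda>0$ and the $\epsilon_i^{(k)}$ do not all carry the same sign — precisely the heterogeneous-bias regime documented in Figure~\ref{fig2} — whereas if every judge is biased in the same direction, shrinking toward their mean leaves the aggregate bias unchanged, matching the ``bias average'' caveat noted earlier.

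The main obstacle is not the inequality chain, which is elementary, but justifying that the linear-shrinkage surrogate faithfully represents the true UDA dynamics: the trained network does not literally average scores but reweights each pairwise Elo update through $K_{ij}$ and the soft labels, and the implemented target $\mathbf{G}$ is the baseline-Elo average rather than the UDA-score average. I would therefore present the result explicitly as a \emph{motivational} bound, stating its two assumptions up front — (a) the aggregate effect of the learned update is a non-expansive pull of each judge's score vector toward the anchor, and (b) the anchor is a balanced average of the judges' scores, so its bias satisfies $|\gamma_i|\le\frac{1}{M}\sum_{k}|\epsilon_i^{(k)}|$ — and note that weakening (a) to any contraction toward the anchor, or (b) to any $G_i$ obeying that norm bound, leaves the conclusion intact. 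The residual gap between this idealization and the trained adapter is exactly what the experimental sections are designed to probe.
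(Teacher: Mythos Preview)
Your proposal is correct and follows essentially the same route as the paper: model the alignment as a convex combination of the judge's bias and the across-judge mean bias, apply the triangle inequality termwise, bound the anchor bias by the average absolute bias via convexity of $|\cdot|$ (the paper phrases this step as Jensen's inequality), and sum. Your parameterization $(1-\lambda,\lambda)$ is the paper's $(\alpha,1-\alpha)$; the paper also works pointwise rather than in expectation, so your final expectation step is a harmless add-on, and your discussion of the strictness condition and of the gap between the linear surrogate and the actual learned update is more explicit than the paper's but entirely in its spirit.
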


\begin{proof}[Proof Sketch (Illustrative)]
We model the alignment as a direct shrinkage towards the mean, serving as a theoretical motivation for the consensus-driven loss in Eq. \eqref{eq:loss}. Let the updated bias be a convex combination of the original bias and the average bias: $\epsilon_i^{(k)}(\text{UDA}) = \alpha\epsilon_i^{(k)} + (1-\alpha)\bar{\epsilon}_i$.

By applying the triangle inequality and Jensen's inequality, we can show that the sum of absolute biases after this alignment step is bounded by the original sum:
$$\sum_{k=1}^M |\epsilon_i^{(k)}(\text{UDA})| \le \sum_{k=1}^M \left( \alpha|\epsilon_i^{(k)}| + (1-\alpha)|\bar{\epsilon}_i| \right) \le \sum_{k=1}^M |\epsilon_i^{(k)}|$$
This idealized model illustrates the underlying principle: while an individual, well-calibrated judge's accuracy might slightly decrease (a trade-off we observe empirically), the collective variance and aggregate bias across all judges are provably reduced. We emphasize that this is a simplified linear model intended to provide intuition, while the full UDA framework employs a learned non-linear function $f_\theta$ to achieve this alignment adaptively. The full algebraic details are provided in Appendix.
\end{proof}

\begin{figure*}[ht]
\centering
\includegraphics[width=1.8\columnwidth]{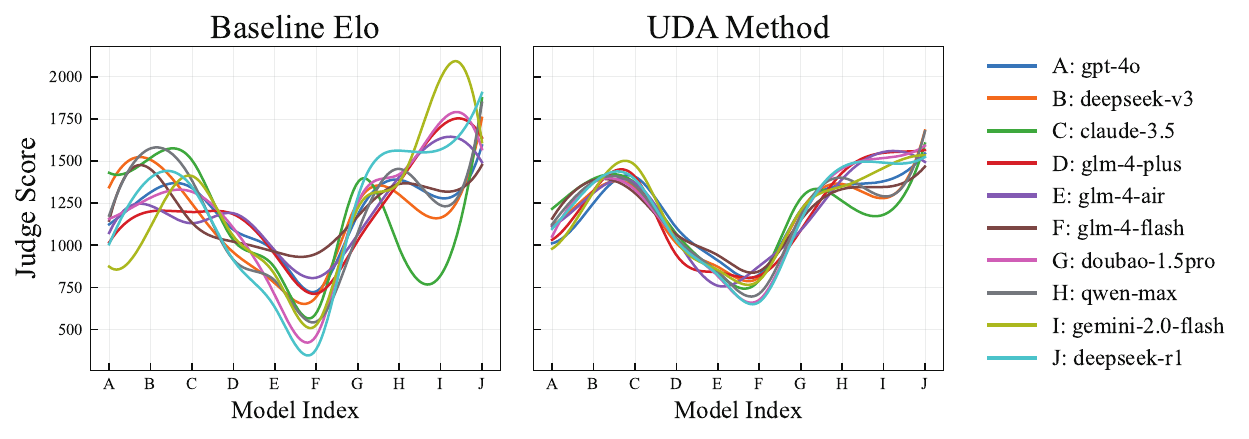}
\caption{Score stability across ten judge llms. Left: Baseline Elo. Right: UDA. Our method markedly aligns scores across diverse LLM judges, yielding significantly lower inter-judge variance.}
\label{fig3}
\end{figure*}

\section{Experiments}
\subsection{Experimental Setup}

\subsubsection{Metrics.}
\textbf{Inter-Judge Standard Deviation:} The standard deviation of a model’s scores across all LLM judges. Lower values signify stronger inter-judge agreement and reduced bias, measuring UDA's success in curbing score dispersion.
\textbf{Pearson Correlation with Human Judgments:} The correlation between a method's generated ranking and the human-annotated ground-truth. Higher values indicate closer alignment with human preferences.
\textbf{Alignment with Baseline Consensus (for analysis):} The Pearson correlation and MSE between a judge’s UDA score and the baseline Elo consensus (mean across judges). These internal metrics verify that our training objective successfully pulls individual judges toward a stable collective anchor.

\textbf{Datasets.} We evaluate on two datasets:
\textbf{ArenaHard}: A subset of 500 prompts from the ArenaHard benchmark, with responses from 10 diverse LLMs (e.g., GPT-4o, DeepSeek-V3). The same 10 models act as judges, yielding 450,000 pairwise judgments.
\textbf{Human-Annotated Transfer Set}: A new dataset of 100 open-ended prompts with human preference labels, designed to test zero-shot generalization. The data pipeline is identical to ArenaHard.

\textbf{Training Details.} The adapter is a 3-layer MLP trained with AdamW. We use a learning rate scheduler and select the best model based on validation loss. Full hyperparameters are in the Appendix.

\textbf{Baselines.} The main baseline is the \textbf{Baseline Elo} with a fixed K-factor ($K=32$). We compare it against our full \textbf{UDA} method. The \textbf{Consensus} score, our optimization target, is evaluated for its correlation with human judgments.

\begin{figure}[ht]
\centering
\includegraphics[width=0.85\linewidth]{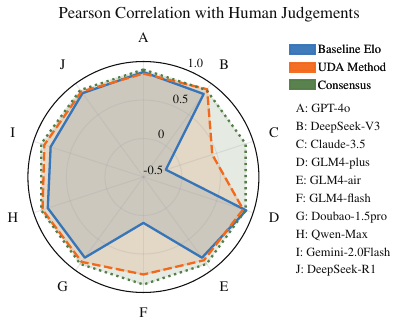}
\caption{Per-model Pearson correlation ($\uparrow$) between the judge scores computed with the baseline method , UDA method and consensus result, respectively, and the human-annotated ground-truth scores. Notably, consensus scores correlate best with human judgments; the uniform correlation of 0.89 arises because the consensus is shared across all judges, irrespective of individual model participation. This validates the consensus as a robust optimization target.}
\label{fig:pearson}
\end{figure}

\begin{figure*}[ht]
\centering
\includegraphics[width=0.8\linewidth]{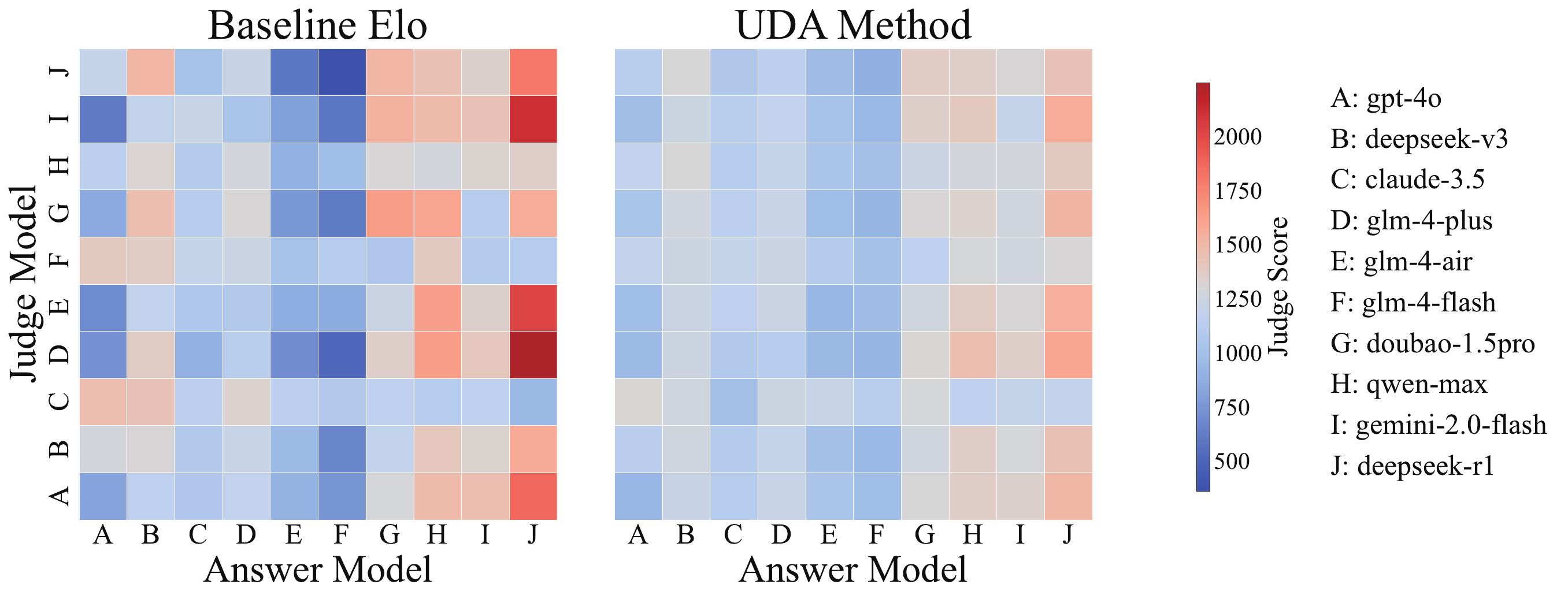}
\caption{Judge-score heat-maps on \textsc{Human-Annotated Transfer Set}. After refinement with UDA, the scores judged by different LLM judges converge markedly, yielding a visibly narrower chromatic variance within each column of the heatmap.}
\label{fig:heatmap}
\end{figure*}

\subsection{Experiment Results}
\subsubsection{Result on ArenaHard}
\paragraph{Cross-Model Variance Reduction}
Table~\ref{tab:variance} reports the per-model standard deviation of Elo scores across ten distinct judge LLMs on ArenaHard. UDA significantly reduces the average inter-judge standard deviation from $158.5$ to $64.8$, a \textbf{relative decrease of $59.1\%$}. The most volatile model, \textit{gemini-2.0-flash}, exhibits the largest absolute shrinkage ($341.9 \rightarrow 128.8$), confirming UDA's efficacy against extreme judge-specific bias.

\paragraph{Visual Consistency}
Figure~\ref{fig3} displays the score trajectories under both methods. With baseline Elo (left), divergent priors yield highly inconsistent curves with score envelopes exceeding $340$ points. UDA (right) compresses this envelope to within $130$ points, demonstrating the emergence of a stable consensus. This consistent reduction in variance verifies that UDA successfully suppresses judge-specific priors, yielding more stable rankings across evaluators.

\subsubsection{Zero-Shot Transfer to Human-Annotated Dataset}
\label{sec:transfer_human}
We applied the UDA adapter trained solely on ArenaHard directly to the \textsc{Human-Annotated Transfer Set} without any retraining to evaluate its zero-shot generalization.

\paragraph{Variance Reduction and Correlation with Human Judgments}
Even on unseen prompts, UDA proves highly effective. As shown in Table~\ref{tab:variance_transfer}, it reduces the average inter-judge standard deviation from $193.95$ to $70.97$, a \textbf{63.4\% relative decrease}. More critically, Figure~\ref{fig:pearson} shows that this increased agreement translates to better alignment with human preferences. UDA (orange markers) consistently outperforms the baseline (blue markers), raising the average Pearson correlation from $0.651$ to $0.812$ (\textbf{+24.7\%}). Notably, it elevates even low-capacity judges (\eg \texttt{glm-4-flash}) to reach near-parity with high-quality ones (\eg \texttt{deepseek-r1}) after debiasing. Furthermore, the consensus result exhibits the strongest human correlation, validating its use as an effective optimization objective.

\paragraph{Qualitative Heatmap Analysis.} Figure\ref{fig:heatmap} offers a qualitative view of the alignment by heatmaps of judge scores. The \textbf{Baseline Elo} map (left) shows pronounced vertical stripes, indicating strong, judge-induced distortions where a single judge rates models differently from its peers. In contrast, the \textbf{UDA Method} (right) yields a much smoother pattern where the columnar structure aligns more closely with human annotations, confirming effective cross-judge calibration and alignment with a more objective standard.

\begin{table}[ht]
\centering
\small
\setlength{\tabcolsep}{4pt}
\begin{tabular}{lcc}
\toprule
\textbf{Method} & \makecell{\textbf{Avg. Inter-Judge} \\ \textbf{Std. Dev. (↓)}} & \makecell{\textbf{Human Pearson} \\ \textbf{Corr. (↑)}} \\
\midrule
Baseline Elo & 193.95 & 0.651 \\
UDA (Full) & 70.97 & \textbf{0.812} \\
UDA (Ablated) & \textbf{65.64} & 0.510 \\
\midrule
Reduction (Full UDA) & 63.4\% & \textbf{+24.7\%} \\
Reduction (Ablated) & \textbf{66.2\%} & -21.6\% \\
\bottomrule
\end{tabular}
\caption{Ablation study results on the \textsc{Human-Annotated Transfer Set}. Removing self-awareness features (UDA (Ablated)) further reduces score variance but critically degrades correlation with human judgments, highlighting their essential role in achieving meaningful alignment.}
\label{tab:ablation}
\end{table}

\subsubsection{Ablation Study: The Critical Role of Self-Awareness Features}
To isolate the contribution of our key innovation—features derived from the judge's own response—we conducted a critical ablation study. We trained and evaluated a variant of UDA, hereafter referred to as \textbf{UDA (Ablated)}, where all features related to the judge's own answer embedding ($\mathbf{e}_k$) were removed from the input vector $\phi_{ij}^{(k)}$. This reduced the feature space dimensionality from 4617 to 1539 and effectively rendered the model blind to the judge's stylistic priors. The model was trained using the same consensus-driven objective.

The results, summarized in Table \ref{tab:ablation}, reveal a fascinating trade-off. On one hand, the ablated model achieved an even greater reduction in inter-judge standard deviation, cutting it by \textbf{66.2\%} (from 193.95 to 65.64). This indicates that, when stripped of self-awareness features, the model becomes exceptionally effective at its direct optimization goal: forcing all judges to agree with each other.

However, this enhanced consensus came at a steep price. The average Pearson correlation with human judgments plummeted from \textbf{0.651} (Baseline) to \textbf{0.510} for the ablated model, a significant decrease. In contrast, the full UDA model, equipped with self-awareness features, boosted this correlation to \textbf{0.812}.

This finding provides powerful evidence that the self-awareness features are not merely beneficial; they are \textbf{essential for grounding the debiasing process}. Without them, the consensus-driven optimization risks converging to a "false consensus"—a state where judges agree with each other more strongly, but their collective judgment diverges from a human-aligned ground truth. In essence, knowing its own stylistic preferences allows a judge (and by extension, the UDA framework) to properly discount them, thereby steering the collective evaluation towards a more objective and meaningful alignment.

\section{Analysis and Discussion}
In this section, we analyze \textbf{UDA} in light of recent advances in LLM-as-a-judge calibration and contrast its empirical behavior with the theoretical claims made in Section \ref{sec:UDA}.

\subsection{Methodological Novelty in Context}
\textbf{UDA}\xspace revises the canonical Elo update in two orthogonal directions that jointly attack \emph{judge-induced inconsistency}.  
First, baseline Elo applies a \emph{global}, \emph{static} $K$, implicitly assuming homoscedastic noise. In practice, this is violated by LLM judges whose priors correlate with their own outputs. \textbf{UDA}\xspace replaces the constant $K$ with an \emph{instance-adaptive} $K_{ij}$ learned from human-free features that capture both inter-answer relationships and self-preference signals. Consequently, likely biased comparisons are down-weighted.

Second, \textbf{UDA}\xspace replaces the raw win label $r_{ij}^{(k)}$ with learned \emph{soft} labels $(s_i,s_j)$ that represent a bias-corrected win probability. This entire procedure is trained unsupervised, guided only by minimizing dispersion among all judges' Elo trajectories to distill a robust latent consensus.

\subsection{Empirical Fidelity}
Our experiments confirm that UDA's consensus-driven objective successfully translates to empirical gains. The training loss (Eq.\,\ref{eq:loss}) guides the model to reduce inter-judge variance while maintaining rank fidelity to a stable consensus anchor. As evidenced by zero-shot transfer results (detailed in Appendix), UDA significantly improves alignment with the baseline consensus across both Pearson correlation and MSE metrics, confirming that the optimization effectively suppresses judge-specific noise.

Crucially, this reduction in variance corresponds to a greater alignment with human evaluators. UDA boosts the average Pearson correlation with human judgments by 24.7\%, democratizing evaluation by elevating weaker judges to parity with stronger ones (Figure \ref{fig:pearson}). The dramatic compression of score variance, as visualized in Figure \ref{fig3}, empirically validates the principle motivating Theorem 1: pulling scores toward a consensus reduces system-wide variance and produces more reproducible, judge-invariant rankings.

\section{Conclusion}
We introduced \textbf{UDA}, an unsupervised, plug-and-play framework that mitigates self-preference bias in LLM-as-a-judge by dynamically calibrating the Elo update rule.  
By learning an instance-level K-factor and soft win-probabilities from lightweight, human-free features, UDA distills a consensus ranking that is simultaneously \emph{judge-invariant} and \emph{human-aligned}.  
Extensive experiments on two benchmarks confirm that UDA (i) shrinks inter-judge score dispersion by 59–63\%, (ii) raises average Pearson correlation with human labels by 25\%, and (iii) elevates low-capacity judges to parity with state-of-the-art models without retraining.  
Theoretical analysis supports the bias reduction under mild assumptions.  
Future work will explore richer feature spaces, weighted consensus objectives, and sub-quadratic approximations to scale beyond hundreds of judges.  
Code and data are publicly released to foster reproducible LLM evaluation.

\section{Acknowledgments}
This work was completed during Yang Zhang's internship at ZhipuAI. We would like to express sincere gratitude to the ZhipuAI 
% LLM Evaluation team for their support, guidance, and invaluable feedback throughout the project. Their collective expertise and encouragement were instrumental in shaping the direction and depth of this research.
% We especially thank our 
data annotation team, including Zikang Wang, Xiaohan Zhang and others, for their diligent efforts in curating the human-annotated datasets that made this research possible.
% All of this work were completed during ZhangYnag’s
% internship at ZhipuAI. ZhangYang would like to express sincere gratitude to the ZhipuAI LLM Evaluation team for their support, guidance, and invaluable feedback
% throughout the project. Their collective expertise and encouragement were instrumental in shaping the direction and depth of this research.

% \noindent Special thanks to Dr. Wang Cunxiang for his patient guidance in project advancement and paper writing!

\bibliography{aaai2026}

\appendix
\section{Appendix}

\subsection{Limitations and Future Work}
\paragraph{Feature Sensitivity.}
The current feature vector $\phi_{ij}$ relies heavily on pre-trained BERT embeddings.  
While effective, this design assumes that semantic similarity is a \emph{sufficient} proxy for self-preference.  
Future work could incorporate \emph{instruction-level} features (e.g., task category, prompt difficulty) or \emph{meta-judge} signals (e.g., perplexity under the judge’s own decoding) to better disentangle intrinsic quality from stylistic preference.

\paragraph{Objective Tradeoffs.}
The consensus loss $\mathcal{L}$ treats all judges as \emph{equally informative}, which may \emph{over-correct} high-quality judges like GPT-4o (Figure~\ref{fig:pearson}).  
A weighted variant that down-weights judges with historically poor correlation could preserve \emph{discriminative power} while still reducing bias.

\paragraph{Scalability.}
Training the adapter on 450K pairwise judgments is computationally lightweight (8 hours on a single 4090), but the $O(MN)$ complexity of the consensus loss becomes prohibitive for $M\gg100$.  
Approximate solutions—e.g., clustering judges into \emph{meta-judges}—could retain debiasing benefits at scale.

\subsection{Additional Empirical Analysis}
\label{app:empirical-analysis}
\paragraph{Loss-driven Convergence.}
Recall that the training objective (Eq.\,\ref{eq:loss}) enforces three complementary desiderata: (i) \emph{anchor alignment} via MSE, (ii) \emph{rank fidelity} via Pearson correlation, and (iii) \emph{collective consensus} via global MSE. Table \ref{tab:transfer-cor-mse} shows the zero-shot transfer performance on the \textsc{Human-Annotated Transfer Set}. UDA boosts the mean Pearson correlation with the baseline-Elo consensus from $0.7363$ to $0.8778$ and slashes the average squared distance to the anchor scores from $43,152.6$ to $19,452.7$. This confirms that the loss composition successfully suppresses judge-specific noise while maintaining fidelity to a stable, judge-agnostic ranking.

\begin{table}[ht]
\centering\footnotesize
\setlength{\tabcolsep}{1mm}
\begin{tabular}{l|cc|cc}
\toprule
\multirow{2}{*}{Model} & \multicolumn{2}{c|}{Pearson $\uparrow$} & \multicolumn{2}{c}{MSE $\downarrow$} \\
\cmidrule(lr){2-3}\cmidrule(lr){4-5}
& baseline & UDA & baseline & UDA \\
\midrule
gpt-4o           & 0.9432 & 0.9280 & 14070.3 & 13697.7 \\
deepseek-v3      & 0.9102 & 0.9868 & 11559.5 & 13369.0 \\
claude-3.5       & $-$0.2248 & 0.1791 & 108549.5 & 66849.0 \\
glm4-plus        & 0.9746 & 0.9614 & 59878.7 & 6080.4 \\
glm4-air         & 0.8971 & 0.9627 & 32496.8 & 9160.3 \\
glm4-flash       & 0.1865 & 0.8890 & 72145.9 & 34508.7 \\
doubao-1.5pro    & 0.9066 & 0.9906 & 25488.3 & 8438.5 \\
qwen-max         & 0.8955 & 0.9629 & 19261.0 & 22277.0 \\
gemini-2.0-flash & 0.9222 & 0.9586 & 52760.4 & 9481.8 \\
deepseek-r1      & 0.9518 & 0.9590 & 35315.3 & 10664.2 \\
\midrule
\textbf{Average} & 0.7363 & \textbf{0.8778} & 43152.6 & \textbf{19452.7} \\
\bottomrule
\end{tabular}
\caption{Analysis of Alignment with Baseline Consensus. Zero-shot transfer results on \textsc{Human-Annotated Transfer Set}.  UDA improves average Pearson correlation by $+0.1415$ and reduces average MSE by $23{,}699.9$ without retraining. We adopt the baseline's mean scores as ground-truth supervision.}
\label{tab:transfer-cor-mse}
\end{table}

\subsection{Prompts for Answer Generation and Judging}
\label{app:prompts}

We release \emph{all} prompts used to (i) elicit model answers and (ii) prompt the LLM judges.  
The answer prompt and judge prompt are fixed across models to ensure reproducibility. These prompts originate from our empirical observations of and iterative refinements to both the large language model’s responses and the pairwise evaluation procedure.

\vspace{1mm}
\begin{minipage}{0.97\linewidth}
\begin{tcolorbox}[title={\small Answer Generation Prompt}]
\small
\textbf{System:} You are the most advanced artificial intelligence ever created by humankind, possessing extensive knowledge of the world. Please answer the following question:\\
\textbf{User:} \{prompt\}
\end{tcolorbox}
\end{minipage}

\vspace{2mm}
\begin{minipage}{0.97\linewidth}
\begin{tcolorbox}[title={\small Pairwise Judging Prompt}]
\small
\textbf{System:} Please act as an impartial adjudicator and assess the quality of the two AI assistants’ responses to the user’s query below. Your evaluation should adopt the user’s perspective and consider dimensions such as the assistants’ helpfulness.\\
\textbf{User:} \textbf{Scoring Criteria}
1. Compare the quality of the two responses in relation to the user’s query and provide a concise justification. Avoid positional bias; the order of presentation must not influence your decision.  
2. If the question is a highly specialized, objective factual query that you cannot accurately adjudicate, return [[C]] to prevent misinformation.  
3. If either response contains errors, explicitly identify them. Do not endorse mutually contradictory statements as simultaneously correct (i.e., do not mark opposing claims from A and B as both being right).  
4. Output exactly one of:
   [[A]] – Assistant A’s response is of higher quality.  
   [[B]] – Assistant B’s response is of higher quality.  
   [[C]] – The two responses are of equal quality.\\
\textbf{Additional Notes}
- Begin by checking for factual accuracy; if an error is present, prioritize it in your judgment.  
- Scores must be based on objective criteria, not subjective preference.  
- Response length is irrelevant; evaluate whether the content genuinely addresses the user’s need from the user’s perspective.  
- Evaluate quality strictly from the user’s standpoint.
\textbf{Example}
User question: What is the English term for "Mining transportation and maintenance fees"?  
Assistant A: The English term is “maintenance fee,” which generally denotes the cost required to keep facilities or equipment in good working condition, including repairs, upkeep, and related expenses.  
Assistant B: The correct English term is “Mine transportation maintenance fee.”  
Analysis: Assistant A’s response contains a factual error; the accurate translation is “Mine transportation maintenance fee.” Assistant B’s answer is correct and satisfies the user’s requirement.  
Result: [[B]]\\
\textbf{Return Format}\\
Analysis: …  
Result: [[x]]\\
\textbf{Input}\\
$[\text{User Question -- Start}]$\\
\textbf{question}\\
$[\text{User Question -- End}]$\\
$[\text{Assistant A -- Start}]$\\
\textbf{answer\_i}\\
$[\text{Assistant A -- End}]$\\
$[\text{Assistant B -- Start}]$\\
\textbf{answer\_j}\\
$[\text{Assistant B -- End}]$\\
\textbf{Please evaluate the quality of the two responses according to the scoring criteria above and provide your final judgment.}

\end{tcolorbox}
\end{minipage}

\subsection{Proofs}
\label{app:proofs}

\subsubsection{Complete Proof of Aggregate Bias Reduction}
\label{app:complete-proof}

\paragraph{Mathematical Setup.}
We analyze an idealized mathematical model of the consensus alignment process to formally demonstrate how it reduces aggregate bias. 
Let $\mathcal{J} = \{1, 2, \dots, M\}$ be the set of judges.  
For a given model $m$, let $R_m^*$ be its unknown true Elo score.  
When evaluated by judge $k \in \mathcal{J}$, the resulting score is
\[
R_m^{(k)}.
\]
We define the bias of judge $k$ for model $m$ as
\[
\epsilon_m^{(k)} = R_m^{(k)} - R_m^*.
\]
This bias $\epsilon_m^{(k)}$ can be positive (over-rating, e.g., \textit{glm-4-flash}) or negative (under-rating, e.g., \textit{gpt-4o}), reflecting the self-preference patterns observed in our experiments.

The baseline Elo system produces scores $R_{m,\text{base}}^{(k)}$ with biases $\epsilon_{m,\text{base}}^{(k)}$.  
The UDA framework updates these scores by pulling them towards a consensus.  
Let the post-alignment score for judge $k$ be $R_{m,\text{UDA}}^{(k)}$.  
The objective function in Eq.~\eqref{eq:loss} effectively models the updated score as a shrinkage towards the mean score:
\[
R_{m,\text{UDA}}^{(k)} 
\approx \alpha R_{m,\text{base}}^{(k)} + (1-\alpha)\bar{R}_{m,\text{base}},
\]
where
\[
\bar{R}_{m,\text{base}} = \frac{1}{M}\sum_{j=1}^M R_{m,\text{base}}^{(j)}
\]
is the average (consensus) score across all judges, and $\alpha \in [0,1]$ is a parameter learned by the network that determines the strength of this alignment.  
The closer a judge is to the consensus, the larger the effective $\alpha$.

\paragraph{Post-Alignment Bias.}
The new bias for judge $k$ after UDA is
\[
\epsilon_{m,\text{UDA}}^{(k)} = R_{m,\text{UDA}}^{(k)} - R_m^*.
\]
Substituting $R_{m,\text{base}}^{(k)} = R_m^* + \epsilon_{m,\text{base}}^{(k)}$ and
$\bar{R}_{m,\text{base}} = R_m^* + \bar{\epsilon}_{m,\text{base}}$ (with
$\bar{\epsilon}_{m,\text{base}} = \frac{1}{M}\sum_{j=1}^M \epsilon_{m,\text{base}}^{(j)}$), we obtain
\begin{align*}
\epsilon_{m,\text{UDA}}^{(k)}
&= \alpha\bigl(R_m^* + \epsilon_{m,\text{base}}^{(k)}\bigr)
   + (1-\alpha)\bigl(R_m^* + \bar{\epsilon}_{m,\text{base}}\bigr) - R_m^* \\[2pt]
&= \alpha\epsilon_{m,\text{base}}^{(k)} + (1-\alpha)\bar{\epsilon}_{m,\text{base}}.
\end{align*}

\paragraph{Aggregate Absolute Bias Reduction.}
Let
\[
S_{\text{base}} = \sum_{k=1}^M \bigl|\epsilon_{m,\text{base}}^{(k)}\bigr|,
\qquad
S_{\text{UDA}} = \sum_{k=1}^M \bigl|\epsilon_{m,\text{UDA}}^{(k)}\bigr|
\]
be the total absolute biases before and after UDA.

\begin{theorem}
For any set of biases $\{\epsilon_{m,\text{base}}^{(k)}\}_{k=1}^M$ and for any $\alpha \in [0,1]$, the total absolute bias after consensus alignment does not increase:
\[
S_{\text{UDA}} \le S_{\text{base}}.
\]
\end{theorem}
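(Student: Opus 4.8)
The plan is to reduce the claimed inequality to a pointwise (term-by-term) bound and then sum. Starting from the post-alignment bias formula derived immediately above the theorem statement, $\epsilon_{m,\text{UDA}}^{(k)} = \alpha\epsilon_{m,\text{base}}^{(k)} + (1-\alpha)\bar{\epsilon}_{m,\text{base}}$, I would first apply the triangle inequality to each term to get $|\epsilon_{m,\text{UDA}}^{(k)}| \le \alpha|\epsilon_{m,\text{base}}^{(k)}| + (1-\alpha)|\bar{\epsilon}_{m,\text{base}}|$, using $\alpha, 1-\alpha \ge 0$. Summing over $k$ yields
\[
S_{\text{UDA}} \le \alpha S_{\text{base}} + (1-\alpha) M \,|\bar{\epsilon}_{m,\text{base}}|.
\]
So it suffices to show $M|\bar{\epsilon}_{m,\text{base}}| \le S_{\text{base}}$, because then the right-hand side is $\le \alpha S_{\text{base}} + (1-\alpha)S_{\text{base}} = S_{\text{base}}$.

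The second step is exactly that remaining bound, which is Jensen's inequality (equivalently convexity of $|\cdot|$, or just the triangle inequality applied to the average): $|\bar{\epsilon}_{m,\text{base}}| = \bigl|\frac{1}{M}\sum_{j=1}^M \epsilon_{m,\text{base}}^{(j)}\bigr| \le \frac{1}{M}\sum_{j=1}^M |\epsilon_{m,\text{base}}^{(j)}| = \frac{1}{M} S_{\text{base}}$. Multiplying by $M$ gives $M|\bar{\epsilon}_{m,\text{base}}| \le S_{\text{base}}$. Chaining the two inequalities closes the argument, and I would note the equality case ($\alpha = 1$, or all biases already equal) for completeness, since it matches the paper's remark that a well-calibrated judge may see no improvement.

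Honestly, there is no real obstacle here: the statement as written is a two-line convexity argument, and the only ``work'' is bookkeeping the convex-combination coefficients and being careful that $\alpha$ and $1-\alpha$ are nonnegative so the triangle inequality preserves the right direction. The one place where I would be slightly more careful is the transition from the abstract theorem statement in the main text (which is phrased with expectations $\mathbb{E}[|\epsilon_i^{(k)}|]$) to this appendix version (which is a deterministic pointwise statement for fixed biases): the deterministic inequality $S_{\text{UDA}} \le S_{\text{base}}$ holds for every realization of the biases, hence it holds in expectation by monotonicity of $\mathbb{E}$, so the appendix lemma immediately implies the main-text Theorem~1. I would state that implication explicitly so the two versions are visibly reconciled. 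If one instead wanted a genuinely probabilistic statement with a learned, data-dependent $\alpha$, one would need $\alpha$ to be independent of the bias realizations (or to condition on it), but under the paper's ``idealized linear shrinkage'' framing $\alpha$ is treated as a fixed constant in $[0,1]$, so no such subtlety arises.
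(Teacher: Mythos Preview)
Your argument is correct and mirrors the paper's proof exactly: triangle inequality termwise, sum to get $S_{\text{UDA}} \le \alpha S_{\text{base}} + (1-\alpha)M|\bar{\epsilon}_{m,\text{base}}|$, then Jensen on $|\cdot|$ to bound $M|\bar{\epsilon}_{m,\text{base}}| \le S_{\text{base}}$. Your additional remarks on the equality case and the passage from the deterministic bound to the expectation statement in the main text are sound and go slightly beyond what the paper spells out.
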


\begin{proof}
By the triangle inequality on each term of $S_{\text{UDA}}$,
\begin{align*}
\bigl|\epsilon_{m,\text{UDA}}^{(k)}\bigr|
&= \bigl|\alpha\epsilon_{m,\text{base}}^{(k)} + (1-\alpha)\bar{\epsilon}_{m,\text{base}}\bigr| \\[2pt]
&\le \alpha\bigl|\epsilon_{m,\text{base}}^{(k)}\bigr|
       + (1-\alpha)\bigl|\bar{\epsilon}_{m,\text{base}}\bigr|.
\end{align*}
Summing over all judges,
\begin{align*}
S_{\text{UDA}}
&\le \sum_{k=1}^M \Bigl(\alpha\bigl|\epsilon_{m,\text{base}}^{(k)}\bigr|
                         + (1-\alpha)\bigl|\bar{\epsilon}_{m,\text{base}}\bigr|\Bigr) \\[2pt]
&= \alpha S_{\text{base}} + (1-\alpha) M\bigl|\bar{\epsilon}_{m,\text{base}}\bigr|.
\end{align*}
Applying Jensen’s inequality to the convex function $|\cdot|$,
\[
\bigl|\bar{\epsilon}_{m,\text{base}}\bigr|
= \Bigl|\frac{1}{M}\sum_{j=1}^M \epsilon_{m,\text{base}}^{(j)}\Bigr|
\le \frac{1}{M}\sum_{j=1}^M \bigl|\epsilon_{m,\text{base}}^{(j)}\bigr|
= \frac{S_{\text{base}}}{M}.
\]
Substituting this bound,
\[
S_{\text{UDA}}
\le \alpha S_{\text{base}} + (1-\alpha)M\frac{S_{\text{base}}}{M}
= S_{\text{base}}.
\]
Hence, the total absolute bias is not increased.
\end{proof}

\paragraph{Conclusion.}
The UDA procedure is guaranteed to not increase (and in practice, strictly reduce, unless all biases are already identical) the total absolute bias across the system of judges.  
By pulling extreme outliers—both positive and negative—towards a central consensus, it produces a more stable and less biased overall ranking, which is validated by the significant reduction in inter-judge score standard deviation and improved correlation with human judgments observed in our experiments.

\subsubsection{Full judge score Matrices}
\label{app:full-elo}

\paragraph{Notation.}
Each row corresponds to one judge model; each column corresponds to one answer model.
Models are ordered as: \texttt{gpt-4o}, \texttt{deepseek-v3}, \texttt{claude-3.5}, \texttt{glm-4-plus}, \texttt{glm-4-air}, \texttt{glm-4-flash}, \texttt{doubao-1.5pro}, \texttt{qwen-max}, \texttt{gemini-2.0-flash}, \texttt{deepseek-r1}.

\vspace{2mm}
\begin{table*}[ht]
\centering
\scriptsize
\setlength{\tabcolsep}{1mm}
\begin{tabular}{l|rrrrrrrrrr}
\toprule
Judge & {gpt-4o} & {deepseek-v3} & {claude-3.5} & {glm-4-plus} & {glm-4-air} & {glm-4-flash} & {doubao-1.5pro} & {qwen-max} & {gemini-2.0-flash} & {deepseek-r1} \\
\midrule
J1 & 1123.98 & 1314.97 & 1339.64 & 1092.25 & 960.67 & 726.34 & 1180.35 & 1389.14 & 1280.25 & 1592.40 \\
J2 & 1344.56 & 1507.43 & 1245.94 & 957.63 & 774.87 & 693.88 & 1254.53 & 1301.13 & 1163.73 & 1756.31 \\
J3 & 1430.04 & 1519.86 & 1504.93 & 1030.06 & 866.62 & 597.91 & 1373.22 & 983.95 & 820.78 & 1872.64 \\
J4 & 1016.41 & 1201.16 & 1196.69 & 1184.40 & 945.62 & 712.98 & 1027.81 & 1374.29 & 1702.74 & 1637.88 \\
J5 & 1073.32 & 1234.74 & 1130.96 & 1190.02 & 967.96 & 807.96 & 1062.67 & 1405.89 & 1633.85 & 1492.63 \\
J6 & 1146.40 & 1451.35 & 1133.97 & 1022.55 & 960.62 & 949.84 & 1172.75 & 1364.08 & 1321.84 & 1476.60 \\
J7 & 1156.73 & 1282.67 & 1317.72 & 1102.28 & 704.31 & 463.17 & 1250.19 & 1420.78 & 1731.67 & 1570.48 \\
J8 & 1176.31 & 1574.78 & 1384.54 & 913.84 & 790.57 & 546.31 & 1076.23 & 1453.41 & 1237.85 & 1846.17 \\
J9 & 873.45 & 1110.10 & 1411.40 & 1053.64 & 826.41 & 526.93 & 1220.16 & 1374.48 & 1991.27 & 1612.16 \\
J10 & 1006.40 & 1401.21 & 1346.90 & 911.50 & 632.07 & 383.50 & 1283.69 & 1561.84 & 1568.52 & 1904.37 \\
\midrule
UDA & 1011.84 & 1250.88 & 1407.04 & 1103.99 & 911.00 & 825.96 & 1208.18 & 1356.61 & 1379.88 & 1544.61 \\
UDA & 1113.52 & 1323.18 & 1341.04 & 1012.35 & 872.24 & 809.57 & 1203.75 & 1363.79 & 1279.60 & 1680.96 \\
UDA & 1216.48 & 1394.90 & 1370.49 & 1066.91 & 833.52 & 792.63 & 1277.65 & 1262.71 & 1181.19 & 1603.51 \\
UDA & 1032.41 & 1326.42 & 1406.93 & 941.68 & 839.66 & 819.68 & 1088.63 & 1430.47 & 1549.08 & 1565.04 \\
UDA & 1112.89 & 1311.94 & 1370.03 & 1038.88 & 759.82 & 876.91 & 1088.26 & 1392.70 & 1553.80 & 1494.76 \\
UDA & 1158.16 & 1386.46 & 1309.95 & 1062.83 & 941.13 & 846.27 & 1149.40 & 1333.55 & 1344.87 & 1467.37 \\
UDA & 1051.69 & 1373.38 & 1329.12 & 1038.44 & 817.83 & 676.24 & 1148.93 & 1455.43 & 1517.84 & 1591.10 \\
UDA & 1124.66 & 1376.73 & 1354.04 & 1040.37 & 844.24 & 713.35 & 1178.46 & 1400.94 & 1290.47 & 1676.75 \\
UDA & 979.56 & 1321.11 & 1476.07 & 1027.03 & 853.74 & 791.86 & 1207.69 & 1347.74 & 1460.67 & 1534.51 \\
UDA & 1096.99 & 1370.99 & 1392.02 & 1028.23 & 824.30 & 663.20 & 1146.43 & 1463.64 & 1490.27 & 1523.94 \\
\bottomrule
\end{tabular}
\caption{\textbf{ArenaHard Dataset: baseline Elo (top) vs.\ UDA Method (bottom)}}
\label{tab:arena-matrix}
\end{table*}

\vspace{3mm}

\begin{table*}[ht]
\centering
\scriptsize
\setlength{\tabcolsep}{1mm}
\begin{tabular}{l|rrrrrrrrrr}
\toprule
Judge & {gpt-4o} & {deepseek-v3} & {claude-3.5} & {glm-4-plus} & {glm-4-air} & {glm-4-flash} & {doubao-1.5pro} & {qwen-max} & {gemini-2.0-flash} & {deepseek-r1} \\
\midrule
J1 & 820.93 & 1169.77 & 1063.86 & 1180.98 & 912.45 & 746.43 & 1289.74 & 1484.22 & 1454.52 & 1877.10 \\
J2 & 1281.40 & 1309.34 & 1078.76 & 1210.14 & 959.14 & 664.53 & 1190.17 & 1402.90 & 1326.95 & 1576.69 \\
J3 & 1459.42 & 1434.81 & 1148.82 & 1341.07 & 1154.05 & 1082.81 & 1158.53 & 1110.92 & 1158.02 & 951.53 \\
J4 & 727.93 & 1375.24 & 895.98 & 1122.83 & 710.39 & 523.17 & 1350.34 & 1641.94 & 1405.05 & 2247.13 \\
J5 & 698.44 & 1184.78 & 1060.97 & 1080.97 & 876.92 & 866.45 & 1233.04 & 1633.95 & 1344.93 & 2019.54 \\
J6 & 1387.21 & 1374.16 & 1198.25 & 1235.12 & 1020.42 & 1117.67 & 1061.98 & 1388.73 & 1103.78 & 1112.69 \\
J7 & 852.40 & 1452.30 & 1114.15 & 1309.11 & 755.75 & 613.18 & 1636.28 & 1601.80 & 1106.17 & 1558.86 \\
J8 & 1143.36 & 1324.79 & 1097.86 & 1280.63 & 893.90 & 976.86 & 1311.58 & 1276.34 & 1332.57 & 1362.10 \\
J9 & 606.89 & 1191.82 & 1212.54 & 1035.90 & 797.55 & 595.32 & 1532.11 & 1478.94 & 1432.74 & 2116.16 \\
J10 & 1197.16 & 1508.89 & 1020.43 & 1220.23 & 591.70 & 361.67 & 1506.17 & 1442.62 & 1346.96 & 1804.17 \\
\midrule
UDA & 929.83 & 1221.50 & 1105.68 & 1212.09 & 1044.76 & 976.05 & 1297.16 & 1364.22 & 1348.20 & 1500.51 \\
UDA & 1135.44 & 1266.64 & 1095.96 & 1199.45 & 998.51 & 948.20 & 1264.47 & 1370.28 & 1283.76 & 1437.28 \\
UDA & 1312.28 & 1272.16 & 992.62 & 1238.63 & 1208.68 & 1126.88 & 1290.37 & 1170.10 & 1199.25 & 1189.03 \\
UDA & 958.25 & 1247.67 & 1080.86 & 1118.82 & 953.30 & 923.91 & 1318.88 & 1452.59 & 1350.77 & 1594.96 \\
UDA & 976.49 & 1227.70 & 1160.61 & 1223.52 & 942.34 & 969.75 & 1264.26 & 1381.11 & 1306.87 & 1547.33 \\
UDA & 1191.71 & 1246.18 & 1194.63 & 1231.78 & 1102.26 & 1004.05 & 1162.83 & 1293.45 & 1265.00 & 1307.13 \\
UDA & 1030.98 & 1272.61 & 1156.56 & 1208.91 & 975.73 & 925.70 & 1311.05 & 1334.97 & 1266.51 & 1516.99 \\
UDA & 1179.94 & 1300.77 & 1111.39 & 1197.34 & 1037.00 & 989.38 & 1234.67 & 1280.93 & 1279.26 & 1389.31 \\
UDA & 986.95 & 1239.54 & 1121.67 & 1179.81 & 1017.57 & 946.45 & 1354.42 & 1394.33 & 1197.81 & 1561.45 \\
UDA & 1124.61 & 1303.36 & 1069.66 & 1151.73 & 971.80 & 889.36 & 1381.52 & 1362.55 & 1311.55 & 1433.87 \\
\midrule
Human & 1049.80 & 1284.72 & 663.97 & 1386.02 & 927.27 & 717.69 & 1329.97 & 1485.66 & 1315.73 & 1839.18 \\
\bottomrule
\end{tabular}
\caption{\textbf{Human-Annotated Transfer Set: baseline Elo (top), UDA Method (mid), and Human score (bottom)}}
\label{tab:human-matrix}
\end{table*}

\subsubsection{Full Judge Score Matrices for Ablation Study}
\label{app:ablation-elo}
We provide the complete judge-score matrices from the ablation study conducted on the \textsc{Human-Annotated Transfer Set}. Each row corresponds to a judge, and each column corresponds to a candidate model. The models are ordered as listed in the main paper.

\vspace{2mm}
\begin{table*}[ht]
\centering
\scriptsize
\setlength{\tabcolsep}{0.5mm}
\begin{tabular}{l|rrrrrrrrrr}
\toprule
\textbf{Judge} & \texttt{gpt-4o} & \texttt{d-seek-v3} & \texttt{claude-3.5} & \texttt{glm4-plus} & \texttt{glm4-air} & \texttt{glm4-flash} & \texttt{doubao-pro} & \texttt{qwen-max} & \texttt{gemini-flash} & \texttt{d-seek-r1} \\
\midrule
\multicolumn{11}{c}{\textbf{Baseline Elo Scores}} \\
\midrule
J1 & 820.93 & 1169.77 & 1063.86 & 1180.98 & 912.45 & 746.43 & 1289.74 & 1484.22 & 1454.52 & 1877.10 \\
J2 & 1281.40 & 1309.34 & 1078.76 & 1210.14 & 959.14 & 664.53 & 1190.17 & 1402.90 & 1326.95 & 1576.69 \\
J3 & 1459.42 & 1434.81 & 1148.82 & 1341.07 & 1154.05 & 1082.81 & 1158.53 & 1110.92 & 1158.02 & 951.53 \\
J4 & 727.93 & 1375.24 & 895.98 & 1122.83 & 710.39 & 523.17 & 1350.34 & 1641.94 & 1405.05 & 2247.13 \\
J5 & 698.44 & 1184.78 & 1060.97 & 1080.97 & 876.92 & 866.45 & 1233.04 & 1633.95 & 1344.93 & 2019.54 \\
J6 & 1387.21 & 1374.16 & 1198.25 & 1235.12 & 1020.42 & 1117.67 & 1061.98 & 1388.73 & 1103.78 & 1112.69 \\
J7 & 852.40 & 1452.30 & 1114.15 & 1309.11 & 755.75 & 613.18 & 1636.28 & 1601.80 & 1106.17 & 1558.86 \\
J8 & 1143.36 & 1324.79 & 1097.86 & 1280.63 & 893.90 & 976.86 & 1311.58 & 1276.34 & 1332.57 & 1362.10 \\
J9 & 606.89 & 1191.82 & 1212.54 & 1035.90 & 797.55 & 595.32 & 1532.11 & 1478.94 & 1432.74 & 2116.16 \\
J10 & 1197.16 & 1508.89 & 1020.43 & 1220.23 & 591.70 & 361.67 & 1506.17 & 1442.62 & 1346.96 & 1804.17 \\
\midrule
\multicolumn{11}{c}{\textbf{UDA (Full) Scores}} \\
\midrule
J1 & 1141.36 & 1202.67 & 1138.28 & 1185.60 & 1107.94 & 1116.15 & 1196.94 & 1255.72 & 1311.88 & 1343.46 \\
J2 & 1336.81 & 1233.62 & 1150.74 & 1161.55 & 1110.13 & 1057.50 & 1180.80 & 1192.78 & 1299.93 & 1276.15 \\
J3 & 1377.93 & 1280.68 & 1105.90 & 1268.59 & 1148.17 & 1192.59 & 1221.75 & 1141.70 & 1215.67 & 1047.01 \\
J4 & 1196.71 & 1175.79 & 1084.40 & 1183.14 & 1079.98 & 1038.87 & 1235.12 & 1290.29 & 1267.97 & 1447.74 \\
J5 & 1076.92 & 1195.13 & 1152.85 & 1157.65 & 1079.05 & 1142.77 & 1217.09 & 1291.34 & 1271.03 & 1416.17 \\
J6 & 1305.32 & 1283.27 & 1193.84 & 1197.23 & 1148.34 & 1184.03 & 1158.19 & 1209.66 & 1211.69 & 1108.43 \\
J7 & 1144.90 & 1250.18 & 1200.55 & 1210.39 & 1083.73 & 1097.62 & 1371.63 & 1232.65 & 1094.98 & 1313.39 \\
J8 & 1197.07 & 1237.31 & 1183.12 & 1190.55 & 1068.54 & 1157.58 & 1232.84 & 1194.02 & 1335.07 & 1203.90 \\
J9 & 1100.04 & 1165.76 & 1212.91 & 1158.41 & 1160.29 & 1064.05 & 1252.41 & 1262.05 & 1136.86 & 1487.23 \\
J10 & 1241.42 & 1252.49 & 1124.28 & 1156.55 & 1034.83 & 1007.34 & 1323.65 & 1243.34 & 1258.30 & 1357.81 \\
\midrule
\multicolumn{11}{c}{\textbf{UDA (Ablated) Scores -- w/o Self-Features}} \\
\midrule
J1 & 1102.51 & 1175.33 & 1150.11 & 1158.92 & 1021.45 & 998.64 & 1201.20 & 1308.81 & 1244.17 & 1441.86 \\
J2 & 1188.19 & 1210.87 & 1162.30 & 1182.25 & 1055.78 & 1010.99 & 1195.81 & 1300.15 & 1230.90 & 1403.76 \\
J3 & 1235.40 & 1244.11 & 1120.57 & 1211.70 & 1088.31 & 1090.25 & 1188.93 & 1255.44 & 1222.01 & 1313.28 \\
J4 & 1080.15 & 1221.90 & 1098.88 & 1140.33 & 999.50 & 965.87 & 1210.74 & 1340.22 & 1250.60 & 1511.81 \\
J5 & 1055.77 & 1180.25 & 1145.92 & 1133.19 & 1011.66 & 1022.40 & 1205.11 & 1335.98 & 1240.15 & 1469.57 \\
J6 & 1210.93 & 1233.50 & 1166.14 & 1190.88 & 1066.02 & 1088.71 & 1180.55 & 1280.60 & 1220.31 & 1322.36 \\
J7 & 1100.28 & 1255.12 & 1177.36 & 1195.40 & 1015.99 & 988.54 & 1245.33 & 1322.71 & 1199.88 & 1449.39 \\
J8 & 1155.61 & 1201.78 & 1160.01 & 1177.66 & 1033.47 & 1060.11 & 1199.95 & 1277.30 & 1255.13 & 1378.98 \\
J9 & 1044.89 & 1177.01 & 1188.45 & 1135.22 & 1050.18 & 990.56 & 1225.10 & 1318.88 & 1215.99 & 1453.72 \\
J10 & 1150.36 & 1241.92 & 1118.04 & 1165.01 & 1010.96 & 948.79 & 1228.69 & 1315.52 & 1255.71 & 1414.00 \\
\bottomrule
\end{tabular}
\caption{Full judge-score matrices for Baseline Elo, the full UDA method, and the ablated UDA variant on the \textsc{Human-Annotated Transfer Set}. The ablated model shows visibly lower column-wise variance but diverges more from human-aligned rankings.}
\label{tab:ablation-matrix}
\end{table*}

\paragraph{Reproducibility Checklist.}  
Code, model checkpoints and the \textsc{Human-Annotated Transfer Set} are released at
\begin{center}
\url{https://anonymous.4open.science/r/62AB93CD-23B4}
\end{center}

\end{document}